\DeclareMathOperator*{\argmax}{arg\,max}
\DeclareMathOperator*{\argmin}{arg\,min}
\newif\ifdraft
\begin{document}

\renewcommand{\vec}[1]{\mathbf{#1}}
\newcommand{\myrightarrow}[1]{\xrightarrow{\makebox[2em][c]{$\scriptstyle#1$}}}
\newtheorem{hypothesis}{Hypothesis}
\newtheorem{lemma}{Lemma}

\newcommand{\pder}[2][]{\frac{\partial#1}{\partial#2}}
\newcommand\norm[1]{\left\lVert#1\right\rVert}

\title{\vspace*{-0.5in}{{\normalsize \rm In the 2021 IEEE/CVF Conference on Computer Vision and Pattern Recognition\hrule}}\vspace*{0.4in}Data-Free Model Extraction}

\author{Jean-Baptiste Truong\thanks{equal contribution}\\
Worcester Polytechnic Institute\\
{\tt\small jtruong2@wpi.edu}
\and
Pratyush Maini\footnotemark[1]~\thanks{Work done while an intern at the Vector Institute.}\\
Indian Institute of Technology Delhi\\
{\tt\small pratyush.maini@gmail.com}
\and
Robert J. Walls\\
Worcester Polytechnic Institute\\
{\tt\small rjwalls@wpi.edu}
\and
Nicolas Papernot\\
University of Toronto and Vector Institute\\
{\tt\small nicolas.papernot@utoronto.ca}
}

\maketitle

\begin{abstract}

Current model extraction attacks assume that the adversary has access to a surrogate dataset with characteristics similar to the proprietary data used to train the victim model. This requirement precludes the use of existing model extraction techniques on valuable models, such as those trained on rare or hard to acquire datasets. In contrast, we propose data-free model extraction methods that do not require a surrogate dataset. 
Our approach adapts techniques from the area of data-free knowledge transfer for model extraction. As part of our study, we identify that the choice of loss is critical to ensuring that the extracted model is an accurate replica of the victim model.  Furthermore, we address difficulties arising from the adversary's limited access to the victim model in a black-box setting. For example, we recover the model's logits from its probability predictions to approximate gradients. We find that the proposed data-free model extraction approach achieves high-accuracy with reasonable query complexity -- 0.99$\times$ and 0.92$\times$ the victim model accuracy on SVHN and CIFAR-10 datasets given 2M and 20M queries respectively.

\end{abstract}

\section{Introduction}

Machine learning (ML) and deep learning, in particular, often require large amounts of training data to achieve high performance on a particular task~\cite{strubell2019energy}. 
Curating such data necessitates significant time and monetary investment~\cite{data2009,imagenet2009}. Thus, the resulting ML model becomes valuable intellectual property, especially when considering the computing resources and human expertise required~\cite{brown2020language,dosovitskiy2020image}. 
Often to monetize these models,  companies make them available as a service via APIs over the web (MLaaS). 
These models are also deployed to end-user devices, making their predictions directly accessible to customers.  
However, the exposure of the model's predictions represents a significant risk as an adversary can leverage this information to steal the model's knowledge~\cite{lowd2005learning,tramer2016stealing,chandrasekaran2019exploring,pal2019framework,Orekondy_2019_CVPR,Correia_Silva_2018,milli2018model,jagielski2020high}. 
The threat of such model extraction attacks is two-fold: adversaries may use the stolen model for monetary gains or as a reconnaissance step to mount further attacks~\cite{papernot2017recon, shumailov2020sponge}.

While model extraction is in many ways similar to model distillation, it differs in that  the victim's proprietary training set is not accessible to the adversary. To stage a model extraction attack, the adversary typically queries the victim using  samples from a surrogate dataset with semantic or distributional similarity to the original training set~\cite{Orekondy_2019_CVPR}.  In the classification setting, the victim's response may be limited to the most-likely label~\cite{Chandrasekaran2018ModelEA} 
or include  confidence values for different class labels~\cite{jagielski2020high}.
The number of queries---i.e., the \emph{query complexity}---is also an important consideration for the adversary. The greater the query complexity, the higher the cost of the attack---unless the victim model is available offline (e.g., deployed on-device). %

In this work, we first demonstrate in Section~\ref{sec:surrogate} that the success of current established practices for model extraction, which often take the form of distillation, depends on the \textit{closeness} of the surrogate distribution to the victim's proprietary training distribution. %
This finding has important implications for the practicality of existing model extraction techniques.

To remedy this, we propose techniques for \emph{data-free model extraction} (DFME). %
In short, we demonstrate the feasibility of extracting ML models without \emph{any} knowledge of the distribution of the proprietary training data. In practice, gathering a surrogate dataset for the purpose of model extraction can be a very expensive process, both in terms of the time and money required to curate it. In particular, the most valuable models are often those for which it is most challenging to curate an appropriate surrogate dataset, i.e., when the victim model's value arises from its proprietary dataset.
Our work builds on recent advances in data-free knowledge distillation, which involve a generative model to synthesize queries that maximize disagreement between the student and teacher models~\cite{micaelli2019zero,fang2019datafree}. Here, the teacher is the victim model whereas the student is the stolen extracted model. We innovate on two fronts: the choice of loss to quantify student-teacher disagreement and an approach for training the generator without the ability to backpropagate through the teacher to compute its gradients (because we only have black-box access to the victim/teacher predictions in our setting).
We observe that it is essential to ensure the stability of the loss computed, and find that the $\ell_1$ norm loss is particularly conducive to data-free model extraction. 
We also demonstrate that using inexpensive gradient approximation (based on the victim  model's outputs) is sufficient to  train a generative model that produces  queries relevant to distill the knowledge of a victim to a student model.
In summary, our main contributions are:
\begin{itemize}
\itemsep0em 
\item We demonstrate in Section~\ref{sec:surrogate} that successful distillation-based model extraction attacks require the adversary to
sample queries from a surrogate dataset
whose distribution is \textit{close} to the victim training data.
\item In Section~\ref{sec:methods}, we propose data-free model extraction (DFME)  to extract ML models without knowledge of  private training data, and only using the victim's black-box predictions. As a by-product of DFME needing to approximate gradients of the victim, this leads us to present a method for recovering per-example logits out of the  probability vector output by a ML model.

\item We validate\footnote{Code and models for reproducing our work can be found at \href{https://github.com/cake-lab/datafree-model-extraction}{https://github.com/cake-lab/datafree-model-extraction}} our DFME technique in Section~\ref{sec:exp} on the SVHN and CIFAR10 datasets and successfully extract a model with 0.99x the victim accuracy with only 2M queries for SVHN, and 0.92x the victim accuracy with 20M queries for CIFAR10.

\item An ablation study of our approach in Section~\ref{sec:ablation} provides two key insights: (1)  measuring disagreement between the victim and extracted models with the $\ell_1$ norm achieves higher extraction accuracy than losses previously considered in the literature; (2) weak gradient estimates yield sufficient signal to train a generator despite only having access to the victim's predictions.

\end{itemize}

\section{Related Work}
\label{sec:related}

We covered the seminal results in model extraction based on surrogate datasets in the introduction.  Here, we discuss data-free knowledge distillation---the technique that underlies our approach to data-free model extraction---as well as the rudiments of generative modeling and gradient approximation required to understand our method.

\subsection{Data-Free Knowledge Distillation}

Knowledge distillation aims to compress, i.e., transfer, the knowledge of a (larger) teacher model to a (smaller) student model~\cite{ba2014deep,hinton2015distilling}. It was originally introduced to reduce the size of models deployed on devices with limited computational resources.
Since then, this line of work has attracted a lot of attention~\cite{zhang2017deep,gou2020knowledge,romero2015fitnets,zagoruyko2017paying, zhang2019teacher}.
While the model owner usually performs knowledge distillation, the original dataset used to train the teacher model may not be available during distillation~\cite{micaelli2019zero}, e.g., because the dataset is too large or confidential. 
Therefore, others have proposed distillation techniques that leverage a surrogate dataset with a similar feature space or distribution~\cite{lopes2017datafree,Orekondy_2019_CVPR}. Others proposed techniques that altogether remove the need for a surrogate dataset, i.e., data-free knowledge distillation~\cite{fang2019datafree,micaelli2019zero}.
 Techniques addressing data-free knowledge distillation have relied on training a generative model to synthesize the queries that the student makes to the teacher~\cite{choi2020data, micaelli2019zero}.

The success of data-free knowledge distillation hints at the feasibility of data-free model extraction. 
Kariyappa et al. observe this as well in concurrent work~\cite{kariyappa2020maze}. 
They also
tackle data-free model extraction through the synthesis of queries by a generative model. Key differences include our loss formulation and optimizer choice (see Section~\ref{sec:methods}).
We show in Sections~\ref{sec:exp} and~\ref{sec:ablation} that our approach consistently outperforms theirs.

\subsection{Generative Models}

Model extraction through data-free distillation involves the generation of training data with which the \emph{student} (i.e., adversary) queries the \emph{teacher} (i.e., victim) model. 
Naively, one could generate these queries randomly~\cite{micaelli2019zero, fang2019datafree}. 
In this paper, we instead build on a min-max game between two adversaries that try to optimize opposite loss functions. 
This approach is analogous to the optimization performed in Generative Adversarial Networks (GANs)~\cite{goodfellow2014generative} to train the generator and discriminator. 
Here, we use GANs in a fashion analogous to their application to semi-supervised learning~\cite{salimans2016improved}: our student and teacher models, in conjunction, play the discriminator's role. 
The key difference here is that GANs are generally trained to recover an underlying \textit{fixed} data distribution. 
However, our generator chases a moving target: the distribution of data which is most indicative of the discrepancies between the decision surfaces of the current student model and its teacher model.

\subsection{Black-box Gradient Approximation}

Zeroth-order optimization is a common approach to approximating gradients~\cite{wang2018stochastic, nesterov2017random,chen2017zoo,liu2020primer}. Such techniques have previously been used to mount attacks against ML models in a \textit{black-box} setting, e.g., to craft
adversarial examples~\cite{tu2019autozoom, chen2017zoo, bhagoji2018practical}. Various gradient estimation methods solve different trade-offs between query complexity and the quality of the gradient estimate~\cite{tu2019autozoom, chen2017zoo, bhagoji2018practical}. We use the forward differences~\cite{wibisono2012finite} 
method for its relatively low query utilization, and systematically study the impact of its main parameter (e.g. the number of random directions) in Section~\ref{subsec:grad-approx}.

\section{How Hard is it to Find a Surrogate Dataset?}
\label{sec:surrogate}
\begin{figure}[t]
\centering
\includegraphics[height=0.23\textwidth]{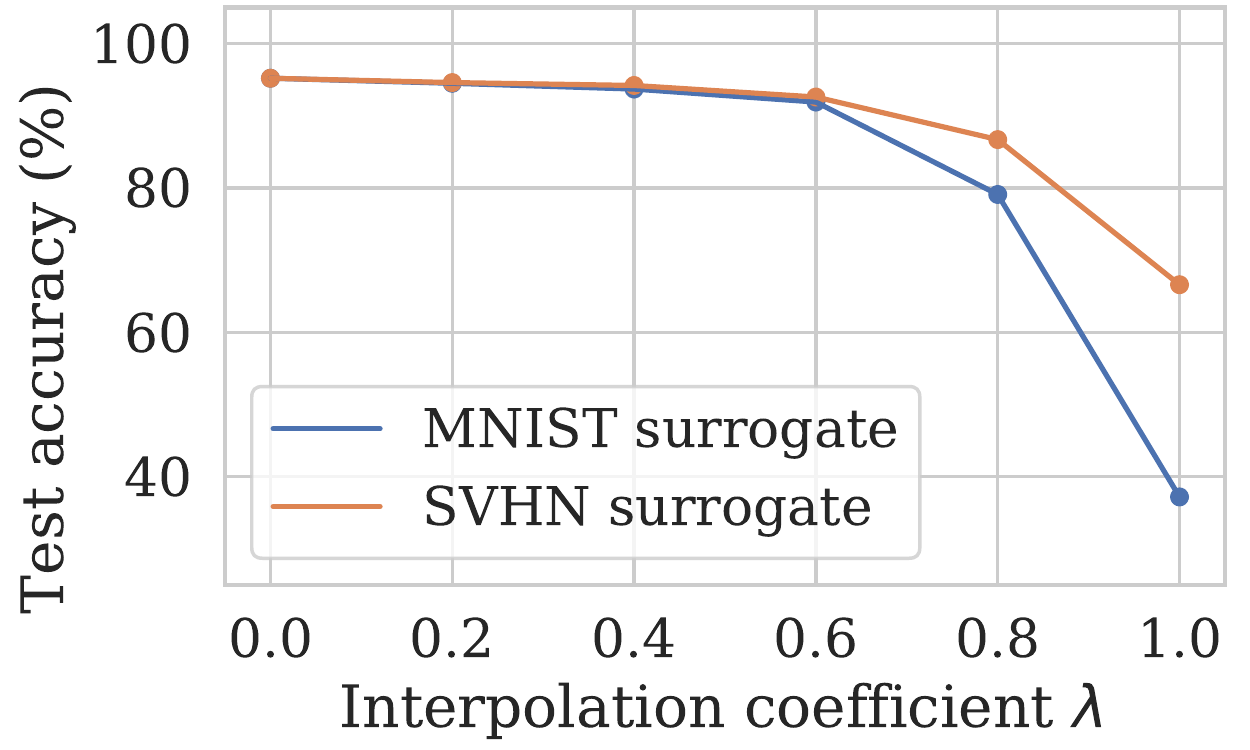}
\caption{Dataset Interpolation with CIFAR10 as target. $\lambda=0$ implies that the inputs are sampled from the target distribution, while $\lambda=1$ implies sampling from the surrogate.
\label{fig:surrogate}}
\end{figure}

To motivate the need for data-free approaches to model extraction,  we evaluate if an adversary must ensure that the distribution of its surrogate dataset is \textit{close} to that of the victim's training dataset. We hypothesize that in the absence of this condition, distillation-based model extraction will return a poor approximation of the victim. 
We perform an analysis on the \textit{closeness} of the distributions along three axes: (1) similarity in feature space, (2) marginal probability distribution of inputs, and (3) class-conditional probability distribution of the inputs. In our experiments, we attempt to steal ML models trained on CIFAR10~\cite{krizhevsky2009learning} and SVHN~\cite{netzer2011reading} using various surrogate datasets that align differently with the axes defined above.  We study in details the experimental setting, optimization problem, surrogate datasets and hyperparameters in Appendix~\ref{app:surrogate}.

Our experiments support our hypothesis. For instance, in case of CIFAR10, with a victim model of accuracy 95.5\%, extracting it using CIFAR100~\cite{krizhevsky2009learning} as surrogate dataset results in extraction accuracy of 93.5\%. This can be largely attributed to the fact that both the CIFAR10 and CIFAR100 datasets are subsets from the same TinyImages~\cite{torralba2008tinyimages} dataset.
However, on using SVHN as surrogate dataset, the model extraction performance dropped remarkably, attaining a maximum of 66.6\% across all the hyperparameters tried. 
In the extreme scenario when querying the CIFAR10 teacher with MNIST~\cite{lecun1998gradient}--a dataset with disjoint feature space both in terms of number of pixels, and number of channels)--- accuracy did not improve beyond 37.2\%.

On the contrary, we notice that the victim trained on the SVHN dataset is much easier for the adversary to extract. Surprisingly, even when the victim is queried with completely random inputs, the extracted model attains an accuracy of over 84\% on the original SVHN test set. 
We hypothesize that this observation is linked to how the digit classification task, at the root of SVHN, is a simpler task for neural networks to solve, and the underlying representations (hence, not being as complex as for CIFAR10) can be learnt even when queried over random inputs.

While these correlations agree with our hypothesis, these experiments can not systematically quantify the \emph{distance} between two distributions (viz. the surrogate and the target). To more systematically understand how the shift away from the target distribution affects extraction performance we interpolated inputs ($x_{in}$) from the surrogate ($x_s$) and target ($x_t$) datasets, s.t. $x_{in} = (1-\lambda)\cdot x_{t} + \lambda\cdot x_{s}$. 
Figure~\ref{fig:surrogate} shows the decrease in extraction accuracy as the distribution diverges from target (CIFAR10) for two different surrogate datasets (SVHN and MNIST).

We make two conclusions from our observations: (1) the success of distillation-based model extraction largely depends on the complexity of the task that the victim model aims to solve; and (2) similarity to source domain appears to be critical for extracting ML models that solve complex tasks. 
We posit that it may be nearly as expensive for the adversary to extract such models with a good surrogate dataset, as is training from scratch. A weaker or non-task specific dataset may have lesser costs, but has high accuracy trade-offs.

\section{Data-Free Model Extraction}
\label{sec:methods}

The goal of model extraction  is to train a student model $\mathcal{S}$ to match the predictions of the victim $\mathcal{V}$ on its private target domain $\mathcal{D_V}$. That is to say, find the student model's parameters $\theta_S$ that minimize the probability of errors between the student and victim predictions $\mathcal{S}(x)$ and $\mathcal{V}(x)~\forall x\in\mathcal{D_V}$:
\begin{equation}
    \argmin_{\theta_S} ~\mathcal{P}_{x\sim {\mathcal{D_V}}} \left(
    \argmax_{i}
    \mathcal{V}_i(x) \neq \argmax_{i} \mathcal{S}_i(x)\right)
\end{equation}
Since the victim's domain, $\mathcal{D_V}$, is not publicly available, the proposed data-free model extraction attack minimizes the student's error on a synthesized dataset, $\mathcal{D}_S$. The error is minimized by optimizing a loss function, $\mathcal{L}$, which measures disagreement between the victim and student:
\begin{equation}
    \argmin_{\theta_S} ~\mathbb{E}_{x\sim {\mathcal{D}_S}} \left[\mathcal{L} (\mathcal{V}(x), \mathcal{S}(x))\right]
\end{equation}
This section describes how we minimize the number of queries made to the victim model with a novel  query generation process, and how we train the student model itself. 

\subsection{Overview}

The overall attack setup is inspired by Generative Adversarial Networks~\cite{goodfellow2014generative}. A generator ($\mathcal{G}$) model is responsible for crafting some input images, and the student model $\mathcal{S}$  serves as a discriminator while trained to match the victim $\mathcal{V}$  predictions on these images. In this setting, the two adversaries are $\mathcal{S}$ and $\mathcal{G}$, which respectively try to minimize and maximize the disagreement between $\mathcal{S}$ and $\mathcal{V}$.

The data flow is shown as a black arrow in Figure~\ref{fig:dfme-diagram}: a vector of random noise $\vec{z}$ is sampled from a standard normal distribution and fed into $\mathcal{G}$ which produces an image $\vec{x}$. Then the victim $\mathcal{V}$ and student $\mathcal{S}$ each perform inference on $\vec{x}$ to finally compute the loss function $\mathcal{L}$. 

During the back-propagation phase (shown with red arrows) gradients from two different sources need to be computed: the gradients of $\mathcal{L}$ with regards to the student's parameters $\theta_S$ and the gradient of $\mathcal{L}$ with regards to the generator's parameters $\theta_G$. Because the victim is only accessible as a black-box, it is not possible to propagate gradients through it. The dashed arrow indicates the need for gradient approximation (see Section~\ref{subsec:grad-approx}). %

\begin{figure} 
 \centering
  \includegraphics[width=\columnwidth]{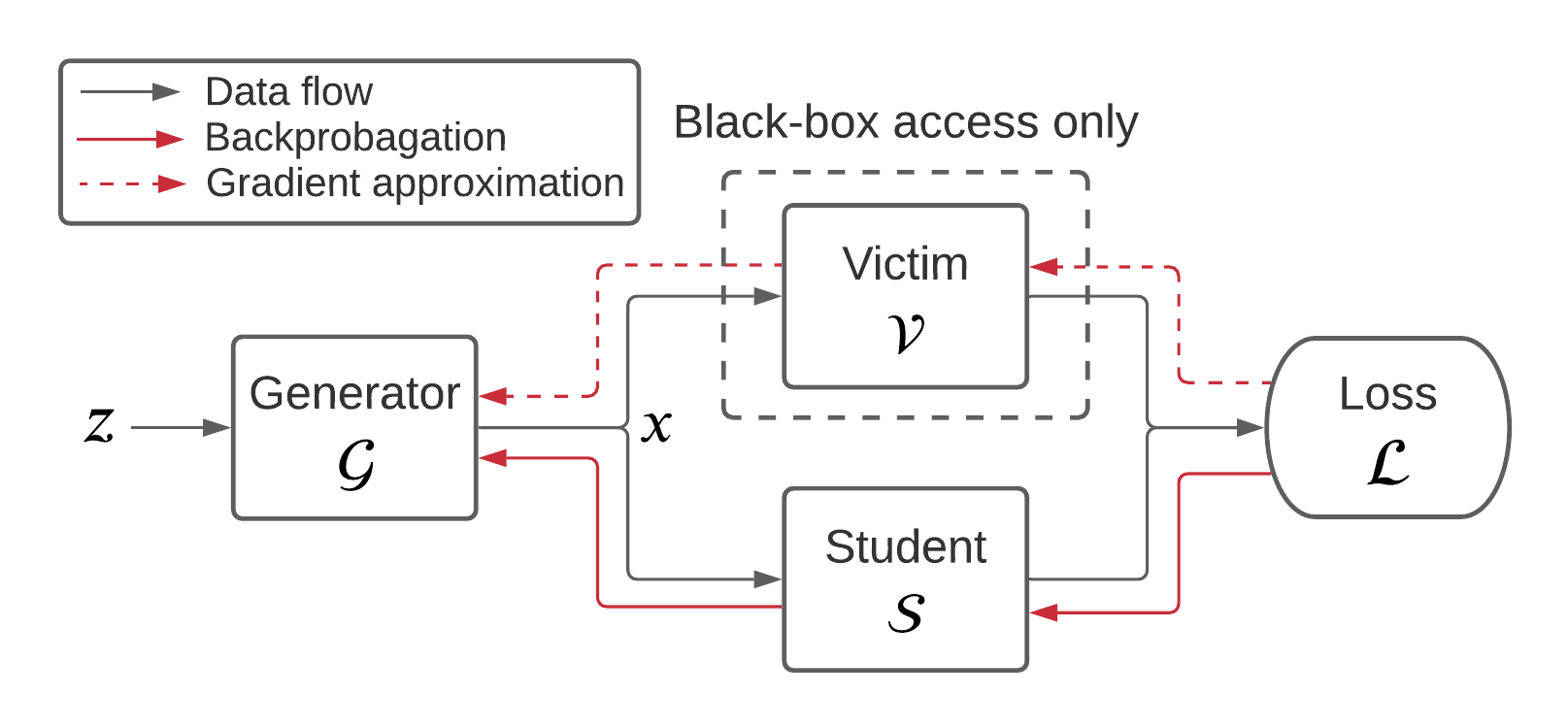}
  \caption{Date-Free Model Extraction Attack Diagram}
  \label{fig:dfme-diagram}
\end{figure}

\paragraph{Student.}
Prior work on knowledge distillation showed that a student model $\mathcal{S}$ can learn from a teacher and reach high accuracy even though its architecture is smaller and different~\cite{cho2019efficacy,micaelli2019zero}. Therefore, in the context of model extraction, the adversary only needs to select a model architecture which has sufficient capacity. This does not require knowledge of the victim architecture but rather generic knowledge of architectural choices made for the task solved by the victim (e.g., a convolutional neural network is appropriate for an object recognition task).  
In our work, we used a student with ResNet-18-8x architecture for model extraction.

The loss function $\mathcal{L}$ is used to measure the disagreement between $\mathcal{S}$ and $\mathcal{V}$. For this function, we use the $\ell_1$ norm loss between victim and student logits (i.e. pre-softmax activations), $l_i(x)$ and $s_i(x)$ respectively. This requires us to recover the logits from the softmax outputs, since the adversary only has access to the later. We introduce an approach for doing so and further elaborate on the choice of $\mathcal{L}$ is detailed in Subsection~\ref{subsec:loss-function}. It is important to note that the gradient of the loss with regard to the student's weights $\theta_S$ does not require gradients of $\mathcal{V}$ since the victim's predictions don't depend on the weights $\theta_S$. 

\paragraph{Generator.}
The generator model $\mathcal{G}$ is used to synthesize images that maximize the disagreement between $\mathcal{S}$ and $\mathcal{V}$. The loss function used for $\mathcal{G}$ is the same as for $\mathcal{S}$ except that the goal is to maximize it. From this setting emerges an adversarial game in which $\mathcal{S}$ and $\mathcal{G}$ compete to respectively maximize and minimize the same function. In other words, the student is trained to match the victim's predictions and the generator is trained to generate difficult examples for the student. The adversarial game can be written as:
\begin{equation}
    \underset{\mathcal{S}}{\min} ~\underset{\mathcal{G}}{\max} ~\mathbb{E}_{z\sim {\mathcal{N}(0, 1)}} \left[\mathcal{L} (\mathcal{V}(\mathcal{G}(z)), \mathcal{S}(\mathcal{G}(z)))\right]
\end{equation}
As shown in Figure~\ref{fig:dfme-diagram}, computing the gradient of $\mathcal{L}$ with regard to $\theta_G$ requires gradients of $\mathcal{V}$. As we only have access to $\mathcal{V}$ as a black-box, gradient approximation techniques are required. These techniques are discussed in Section~\ref{subsec:grad-approx}.

\paragraph{Algorithm.}
Each iteration alternates training the generator $\mathcal{G}$ and  student $\mathcal{S}$. To finely tune the balance between $\mathcal{G}$ and $\mathcal{S}$ training, each of these training phases is  repeated $n_G$ and $n_S$ times, respectively, before moving on to the next epoch. While setting $n_G$ higher allows $\mathcal{G}$ to train faster and to produce more difficult examples for $\mathcal{S}$, it can also be wasteful if $\mathcal{S}$ does not see enough examples. The trade-off between $n_G$ and $n_S$ is an additional hyperparameter that needs tuning. The additional hyperparameters $m$ and $\epsilon$ are related to gradient approximation (see Section~\ref{subsec:grad-approx}).

\begin{algorithm}[t]
\SetAlgoLined
\KwInput{Query budget $Q$, generator iterations $n_\mathcal{G}$, student iterations $n_\mathcal{S}$, learning rate $\eta$, random directions $m$, step size $\epsilon$}
\KwResult{Trained $\mathcal{S}$}
 \While{$Q > 0$}{
  \For{$i = 1 \dots n_G$} {
    $\vec{z} \sim \mathcal{N}(0,\,1)$\\
    $x = \mathcal{G}(z; \theta_\mathcal{G} )$\\
    approximate gradient $\nabla_{\theta_\mathcal{G}} \mathcal{L}(x) $\\
    $\theta_\mathcal{G} = \theta_\mathcal{G} - \eta \nabla_{\theta_\mathcal{G}}\mathcal{L}(x)$\\
  }
  \For{$i = 1\dots n_S$} {
    $z \sim \mathcal{N}(0,\,1)$\\
    $x = G(z; \theta_\mathcal{G} )$\\
    compute $\mathcal{V}(x)$, $\mathcal{S}(x)$, $\mathcal{L}(x)$, $\nabla_{\theta_\mathcal{S}} \mathcal{L}(x)$\\
    $\theta_\mathcal{S} = \theta_\mathcal{S} - \eta \nabla_{\theta_\mathcal{S}}\mathcal{L}(x)$
  }
  update remaining query budget $Q$
 }
 \caption{Data-Free Model Extraction}
\end{algorithm}

\subsection{Loss function}
\label{subsec:loss-function}

Here we discuss different loss functions to measure the disagreement between $\mathcal{V}$ and $\mathcal{S}$.  These losses are commonly used in the knowledge distillation literature given the similarity with the model extraction task~\cite{cho2019efficacy,fang2019datafree}. The choice of the loss function is key to the outcome of the attack since gradients computed through $\mathcal{S}$ and $\mathcal{V}$ can easily impede the convergence of optimizers, e.g., if they vanish because the wrong loss function is used. 

\paragraph{Kullback–Leibler (KL) Divergence} Most prior work in model distillation optimized over the KL divergence between the student and the teacher~\cite{cho2019efficacy,hinton2015distilling,lan2018knowledge}. As a result, KL divergence between the outputs of $\mathcal{S}$ and $\mathcal{V}$ is a natural candidate for the loss function to train the student network.
For a probability distribution over $K$ classes indexed by $i$, the KL divergence loss for a single image $x$ is defined as:
\begin{equation}
    \mathcal{L}_{\text{KL}}(x) = \sum_{i=1}^K \mathcal{V}_i(x) \log \left(\frac{\mathcal{V}_i(x)}{\mathcal{S}_i(x)}\right)
\end{equation}
However, as the student model matches more closely the victim model, the KL divergence loss tends to suffer from vanishing gradients~\cite{fang2019datafree}. Hypothesis~\ref{th:grad-vanish} suggests that $\mathcal{L}_{\text{KL}}$ can make it difficult to achieve convergence while training $\mathcal{G}$ (refer to Appendix~\ref{app:proof-th} for justification).
Specifically, back-propagating such vanishing gradients through the generator can harm its learning. We confirm this  through empirical evaluation as well in  Section~\ref{sec:exp-loss}.

\begin{hypothesis}
\label{th:grad-vanish}
The gradients of the KL divergence loss with respect to the image $x$ should be small compared to the gradients of the $\ell_1$ norm loss when $\mathcal{S}$ converges to $\mathcal{V}$: 
\begin{equation*}
    \|\nabla_x \mathcal{L}_{\text{KL}}(x)\| \underset{\mathcal{S} \rightarrow \mathcal{V}}{\ll} \|\nabla_x \mathcal{L}_{\ell_1}(x)\| 
\end{equation*}
\end{hypothesis}
\paragraph{The $\ell_1$ norm loss.}
To prevent gradients from vanishing, we use the  $\ell_1$ norm loss ($\mathcal{L}_{\ell_1}$) computed with the victim and student logits $v_i$ and $s_i$ where $i \in \{1...K\}$ and $K$ is the number of classes. This was previously found by Fang et al. to prevent gradients from vanishing in knowledge distillation~\cite{fang2019datafree}. Even though $\mathcal{L}_{\ell_1}$ is not differentiable everywhere, it does not suffer from the vanishing gradients issue and yields better results in practice (see Sec.~\ref{sec:exp-loss}). Lastly, the probabilities output by $\mathcal{V}$ need to be transformed into logits to be used in $\mathcal{L}_{\ell_1}$. 
We describe how to perform logit approximation in Appendix~\ref{app:logits-correction}, and evaluate in Sec.~\ref{subsec:exp-logits}.
\begin{equation}
    \mathcal{L}_{\ell_1}(x) = \sum_{i=1}^K|v_i - s_i|
\end{equation}

\subsection{Gradient Approximation}
\label{subsec:grad-approx}

Because only black-box access is provided for $\mathcal{V}$, the optimizer aims at maximizing a function for which it only has an evaluation oracle.
Yet, in order to train $\mathcal{G}$, gradients of the loss with regards to $\mathcal{G}$'s parameters $\nabla_{\theta_G} \mathcal{L}$ must be computed. Thus, we approximate gradients by interacting with the oracle: we maximize $\mathcal{L}$ with zeroth-order optimization.

\subsubsection{Images as a Proxy}

The number of parameters in $\mathcal{G}$ is typically large (millions of parameters) and it would be very query-expensive for a zeroth-order optimizer to get accurate gradient estimations on this large space. Instead, one can approximate gradients with regards to the input images $x$, and then back-propagate this gradient through $\mathcal{G}$~\cite{kariyappa2020maze}. This way the dimensionality of gradients being approximated   is much smaller, which yields more accurate zeroth-order approximations.

Additionally the oracle might only accept images that lie within a pre-defined input domain, for example $[-1 ,1]^d$. To force $\mathcal{G}$ to respect this constraint, we use a hyperbolic tangent activation at the end of the generator architecture. Furthermore, zeroth-order gradients approximation methods usually evaluate the function in the neighborhood of a given point, which can result in query images slightly outside the input domain. To avoid this, we approximate gradients with regard to the pre-activation images (i.e. just before the hyperbolic tangent function is applied).

\subsubsection{Forward Differences Method}
\label{subsubsec:existing-grad-approx}

The Forward Differences method approximates gradients by computing directional derivatives $\mathcal{D}_{\vec{u}_i} f(x)$ of a function $f$ at a point $x$ along $m$ random directions $\vec{u}_i$. The directional derivatives are computed by measuring the variation of $f$ a small step of size $\epsilon$ in the direction $\vec{u}_i$. They are then averaged to form an estimator of the gradient $\nabla_{\text{FWD}} f(x)$. In a way, each directional derivative brings some amount of information about true gradient. The estimator being more accurate as the number of random directions increases. 
\begin{equation}
\label{eq:ForwardD}
    \nabla_{\text{FWD}} f(x) = \frac{1}{m} \sum_{i=1}^m d \frac{f(x + \epsilon \vec{u_i}) - f(x)}{\epsilon} \vec{u_i}
\end{equation}
The main advantage of this method is that the number of query directions $m$ may be chosen independently  of the input space dimensionality, offering a trade-off between query utilization and gradient accuracy. This makes it an appealing candidate for DFME~\cite{kariyappa2020maze}. The influence of the number of query directions $m$ is further described in Section~\ref{sec:res-grad-approx}.

Finite differences, an alternative gradient approximation method used when crafting adversarial examples~\cite{bhagoji2018practical}, requires too many queries per gradient estimate to be viable for data-free model extraction.

\section{Experimental Validation}
\label{sec:exp}

We evaluate data-free model extraction (DFME) against 
victim models trained SVHN and CIFAR-10. We show that the resulting student models can reach high accuracy (e.g., 95.2\% on SVHN) even when the generator only has access to inaccurate gradient estimates. 
Later in Section~\ref{sec:ablation}, we perform an ablation study and evaluate the impact of each attack component on the final student model accuracy and on the query budget Q.

\subsection{Datasets and Architectures}

We evaluate the effectiveness of the proposed DFME method on two datasets: SVHN and CIFAR-10. For each dataset, the victim model architecture is a ResNet-34-8x. These victim models were trained during 50 epochs for SVHN and 200 for CIFAR-10 with SGD at an initial learning rate of 0.1, decayed by a factor of 10 at 50\% of training.

We use ResNet-18-8x as the architecture for our student model. This is inspired by previous works in knowledge distillation~\cite{fang2019datafree} that show how a smaller student is sufficient to distill the knowledge of a larger teacher.
The network was trained with a batch size of 256 with SGD, with an initial learning rate of $0.1$, a weight-decay of $5.10^{-4}$, and a learning rate scheduler that multiplies the learning rate by a factor 0.3 at 0.1$\times$, 0.3$\times$, and 0.5$\times$ the total training epochs. The default query budget $Q$ is 2M for SVHN, and 20M for CIFAR-10 in our experiments. 

The generator used three convolutional layers, interleaved with linear up-sampling layers, batch normalization layers, and ReLU activations for all layers except the last one. The final activation function was the hyperbolic tangent function to output values in the range [-1,1] (see Section~\ref{subsec:grad-approx}). It was also trained with a batch size of 256, but using an Adam optimizer with an initial learning rate of $5.10^{-4}$ which is decayed by a factor 0.3 at 10\%, 30\%, and 50\% of the training. 

For gradient approximation we sample $m=1$ random directions and a step size $\epsilon=10^{-3}$. 

\subsection{Results}
\label{subsec:full-evaluation}

We compare the performance of different extraction attacks in Table~\ref{tab:overall-accuracy}. 
We measure the ratio between the student's accuracy and the victim's accuracy on the victim's test set. This helps compare the performance of DFME across different datasets. The student model's normalized accuracy is reported 
for each dataset and extraction method evaluated---our approach (DFME), our approach with KL divergence loss (DFME-KL), our approach without logit correction (Log-Probabilities), and concurrent work~\cite{kariyappa2020maze} (MAZE). Further, we perform DFME with a range of query budgets and reported the accuracy in Figure~\ref{fig:acc-vs-budget}.

Without any knowledge of the original training distribution, the proposed DFME method achieved as high as 88.1\% (0.92x target) of accuracy with Q = 20M and 89.9\% (0.94x target) with Q = 30M. 
The accuracy of the extracted model exceeds that reported in concurrent work which we refer to as MAZE in our results~\cite{kariyappa2020maze}. However, in our best-efforts at reproducing their results with the details in the paper,\footnote{The authors declined to share their code upon request.}  we were unsuccessful in achieving the same reported accuracy, and were only able to achieve an accuracy of  45.6\% (0.48x target) at best on the student model.
In addition, MAZE reports that they were unable to learn when using extremely few directions (such as $m=1$) for the gradient approximation with CIFAR10, whereas we find that weak gradient approximations are beneficial to reduce the overall query budget of successful attacks.

We observed similar results for the SVHN victim model: reaching as high as 95.2\% (0.99x target) accuracy with only 2M queries. The task for SVHN is much simpler than CIFAR10 given that a model with 84\% (0.87x target) accuracy can be extracted from just random noise. The proposed method allows one to achieve far higher accuracy.

One limitation of our study is that the reported query budgets do not include the cost of hyperparameter tuning. This is an important direction for future work as preliminary experiments suggest that extraction accuracy can be sensitive to the choice of hyperparameters.

\begin{table*}[]
\centering
\begin{tabular}{@{}lrrrrr@{}}
\toprule
Dataset (budget)   & Victim accuracy & DFME & DFME-KL   & MAZE*~\cite{kariyappa2020maze} & Log-Probabilities \\ \midrule
CIFAR10 (20M) & 95.5\%            & 88.1\% \footnotesize{(0.92$\times$)} & 76.7\% \footnotesize{(0.80$\times$)} & 45.6\% \footnotesize{(0.48$\times$)} & 73.2\% \footnotesize{(0.77$\times$)}               \\
SVHN (2M)     & 96.2\%            & 95.2\%  \footnotesize{(0.99$\times$)} & 84.7\% \footnotesize{(0.88$\times$)} & 91.1\% \footnotesize{(0.95$\times$)} & 94.4\% \footnotesize{(0.98$\times$)}               \\ \bottomrule
\end{tabular}
\caption{Accuracy and normalized accuracy of data-free model extraction methods. Results for `MAZE' reflect our best-effort reproduction.}
\label{tab:overall-accuracy}
\end{table*}

\begin{figure}
 \centering
  \includegraphics[width=\columnwidth]{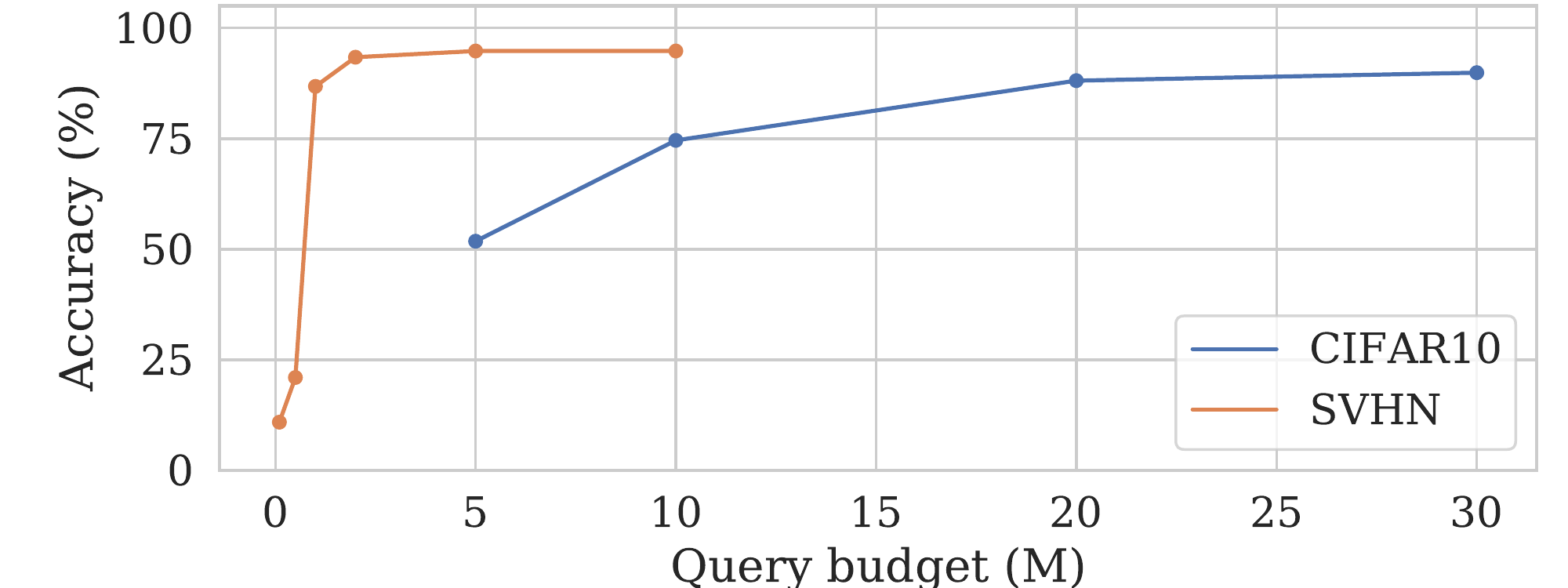}
  \caption{Test accuracy wrt query budget, for SVHN and CIFAR10}
  \label{fig:acc-vs-budget} 
\end{figure}

\section{Ablation Studies}
\label{sec:ablation}
Our work systematically transitions from a data-free knowledge distillation paradigm \cite{fang2019datafree,micaelli2019zero} to a data-free model extraction scenario. The main challenges in this transition were (1) to surpass the need for true gradients for training the student; (2) the lack of access to true victim logits; and (3) the need to restrict the query complexity of the attacks (to reduce the cost of stealing). With this goal, we made specific choices with regards to (a) the loss function; (b) gradient approximation; and (c) logit access. In this section, we detail the impact of each of these choices to the final performance of our proposed DFME method.

\subsection{Choice of Loss Function}
\label{sec:exp-loss}

The choice of loss is of paramount importance to a successful extraction. 
In our DFME approach, the choice of loss involves similar factors to those outlined in research on GANs: multiple works have discussed the problem of vanishing gradients as the discriminator becomes strong in case of GAN training~\cite{arjovsky2017principled,arjovsky2017wasserstein}. For our DFME approach, we minimize the $\ell_1$ distance between the output logits of the student and the teacher. We find that this significantly improves convergence and stability over other possible losses, such as the KL divergence chosen in~\cite{kariyappa2020maze}.

We perform DFME in the same setting to evaluate the difference between the KL divergence and $\ell_1$ losses. Below, we draw comparisons based on two metrics: (1) Final accuracy attained by the student at the end of a fixed number of queries as well as the learning curves of the student; and (2) The norm of gradients of the loss with respect to the input image as the training progresses. 

\paragraph{Test Accuracy.} The key metric of interest for this comparison is the normalized accuracy of the student model at the end of a designated query budget Q of 20M queries for CIFAR10 and 2M queries for SVHN. Table~\ref{tab:overall-accuracy} shows that using the $\ell_1$ loss achieves significantly better test accuracies compared to the KL divergence loss. For instance, on CIFAR10 the accuracy improves from 76.7\% to 88.1\% when switching from KL divergence to the $\ell_1$ loss. 
We also visualize a learning curve for CIFAR10 in  Figure~\ref{fig:acc-vs-epoch}: the KL divergence objective slows converge and tappers off earlier, even when the student has yet to plateau.

\begin{figure}
 \centering
  \includegraphics[width=\columnwidth]{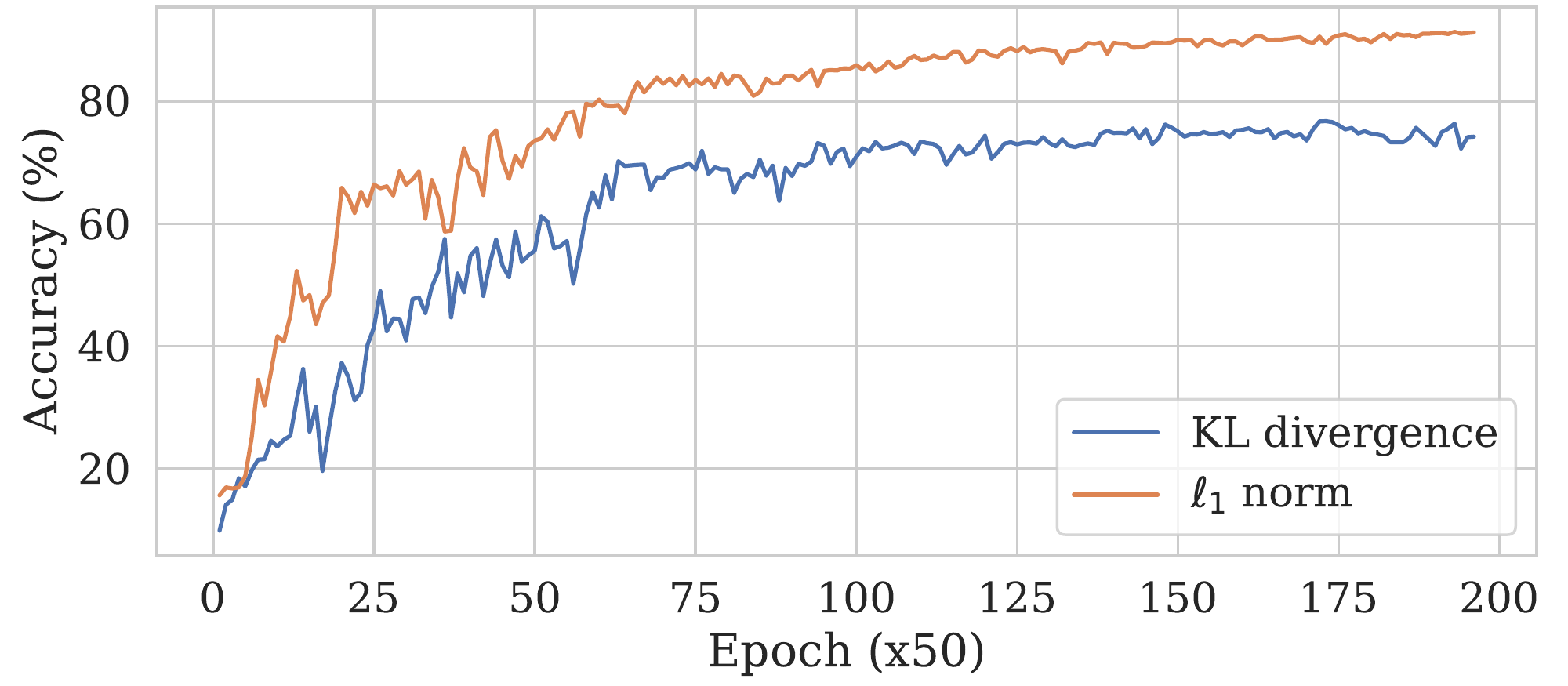}
  \caption{Test accuracy as training progresses for $\ell_1$ and KL divergence losses.}
  \label{fig:acc-vs-epoch} 
\end{figure}

\paragraph{Gradient Vanishing.}The KL divergence loss suffers from vanishing gradients, as explained in Section~\ref{subsec:loss-function}. In DFME, these gradients are used to update the generator's parameters and are thus essential to synthesize queries which extract more information from the victim. In Figure~\ref{fig:vanishing-gradients} we empirically demonstrate that as the student accuracy approaches that of the victim model, the gradients of the KL divergence loss with respect to the input image reduce significantly in norm.
The same decay is slower and less significant in case of the $\ell_1$ loss. We hypothesize that these vanishing gradients are the cause for degraded accuracy when using the KL divergence loss.

\begin{figure}
 \centering
  \includegraphics[width=\columnwidth]{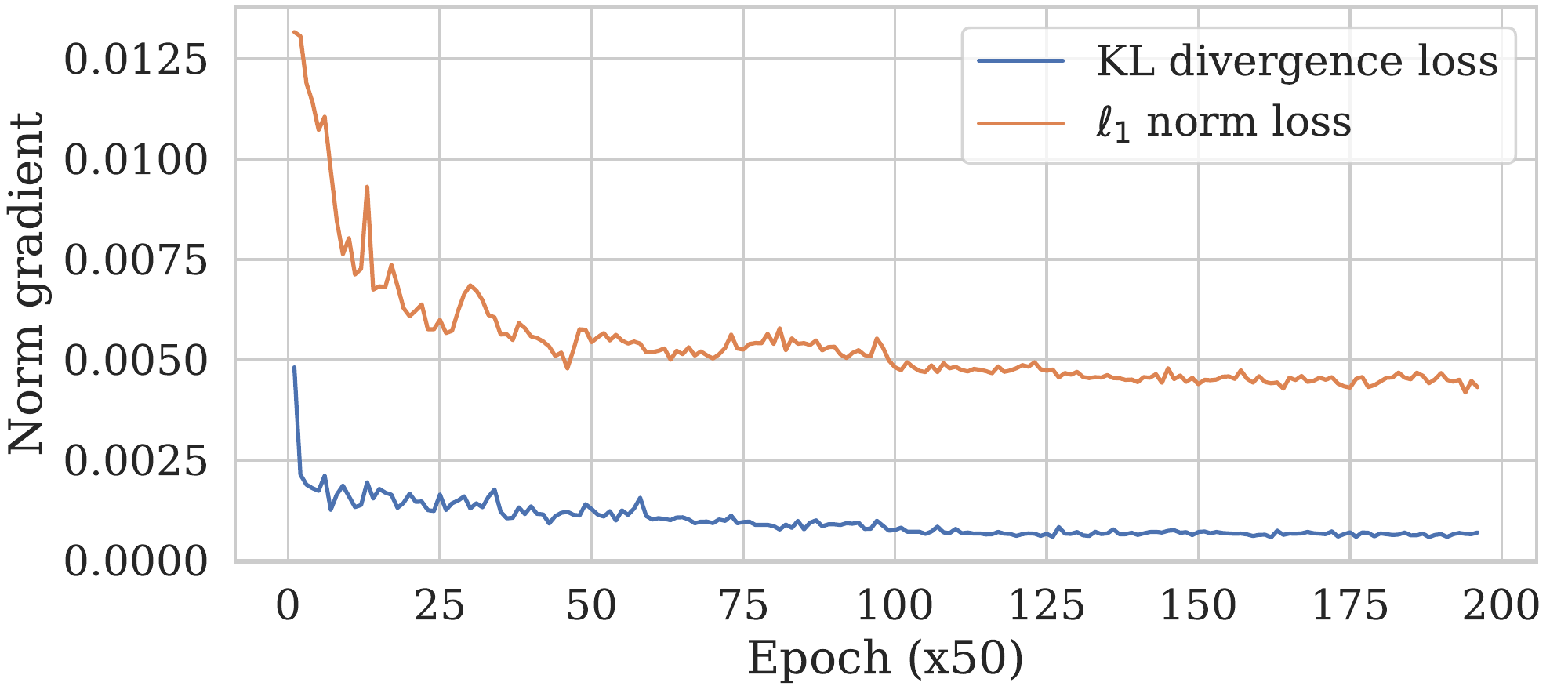}
  \caption{Norm of gradients with respect to the input image, for the KL divergence and $\ell_1$ norm losses.}
  \label{fig:vanishing-gradients} 
\end{figure}

\begin{table}[t]
\centering
\begin{tabular}{@{}l|rrrrr@{}}
\toprule
$m$                 & 1  & 3  & 5  & 8  & 10 \\ \midrule
No. of Queries & 10.04 & 10.02 & 16.33 & 13.80 & 20.00 \\ \bottomrule
\end{tabular}
\caption{Minimum queries (in millions) to reach 85\% accuracy on CIFAR10, for different number of gradient approximation steps.}
\label{table:m-values}
\end{table}

\begin{figure}
 \centering
  \includegraphics[width=\columnwidth]{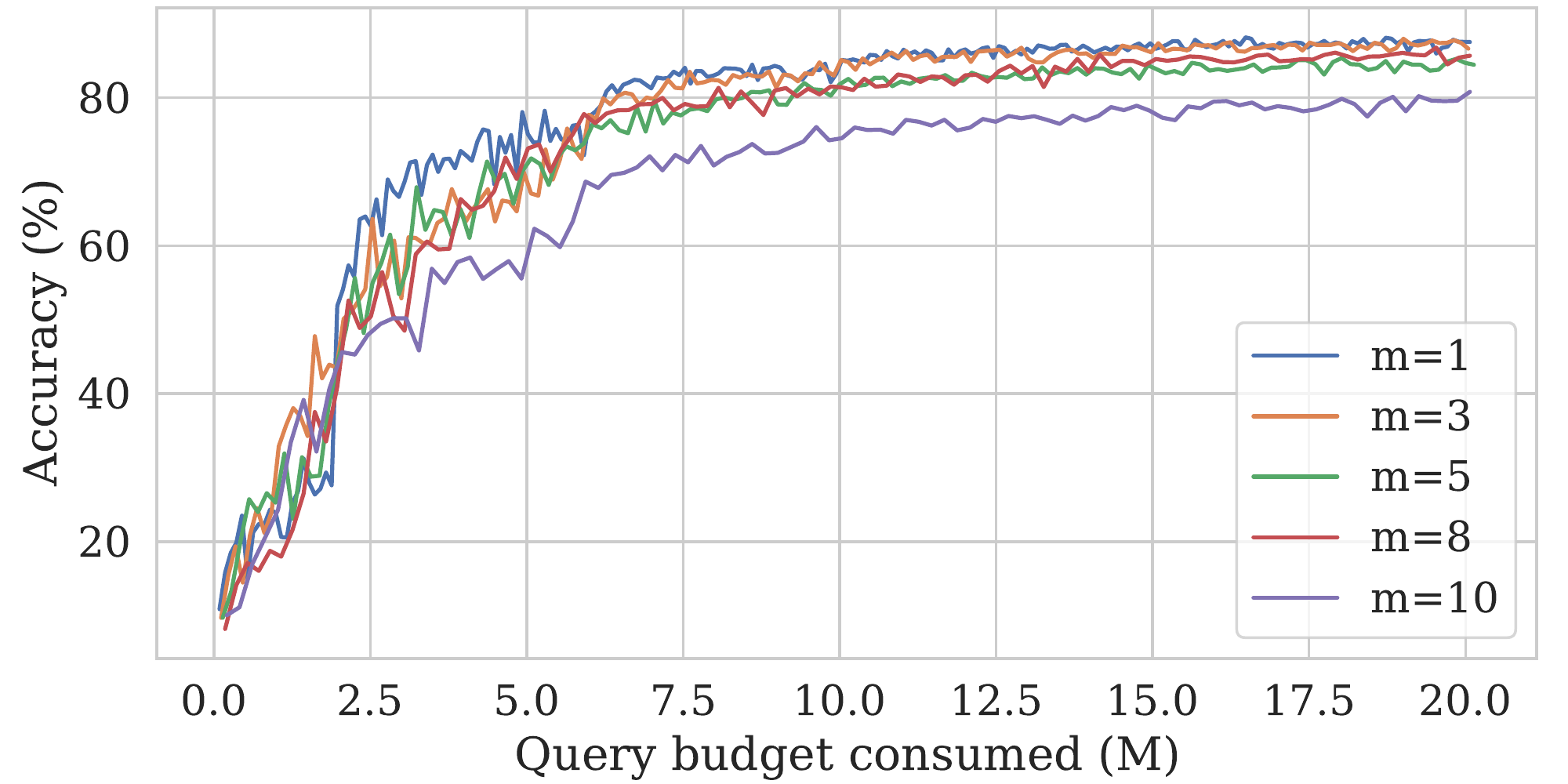}
  \caption{Accuracy of the model during training for different number of gradient approximation steps, $m$. }
  \label{fig:amortizing} 
\end{figure}

\begin{table}[t]
\centering
\scalebox{0.9}{
\begin{tabular}{@{}lrlrrl@{}}
\toprule
        Model&                    \multicolumn{1}{r}{MTL \small(±2.3e-6)
        }                     &  & \multicolumn{1}{r}{MC \small(±2.3e-6)
        } & \multicolumn{1}{r}{LP \small(±1.3)
        }& \\
\midrule
Resnet-34-8x  & -1.24e-5                     &  & 1.24e-5                   & 4.98      &                     \\
Densenet 121 & 5.53e-7                      &  & 1.88e-6                   & 3.88       &                    \\
VGG 16        & \multicolumn{1}{r}{1.97e-5} &  & 1.97e-5                   & 3.93         &                  \\ \bottomrule
\end{tabular}}

\caption{Mean of true logits (MTL) for 3 victim architectures; reconstruction error (MAE) between approximate and true logits when using mean correction (MC) and log-probabilities (LP). }
\label{table:logits-correction}
\end{table}

\subsection{Gradient Approximation}
\label{sec:res-grad-approx}
Recall that 
the $\ell_1$ loss cannot be back-propagated 
through the victim since the adversary only has access to it as a black-box. Recent work on data-free model distillation~\cite{fang2019datafree} has claimed that the gradient information from the teacher is `indispensable at the beginning of adversarial training' because the student alone can not provide useful signal to the generator when randomly initialized.
Below we consider: 
(1) the quality of approximation required; and 
(2) the overall impact on query complexity.
In particular, we compare two approaches for improving the training of our DFME generator: using an increased number of queries to compute more accurate gradient estimates or training generator for longer using poorer gradient estimates. %

\paragraph{Number of gradient approximation steps.} When gradients used to update the generator are approximated with the forward differences method, a larger number of random directions $m$ allows one to compute more accurate gradients. However, in a model extraction setting, each additional gradient approximation step comes at the cost of increased query complexity. In practice, with a fixed query budget $Q$, changing the number of random directions directly impacts the proportion of queries used to train each network. 
This ratio of queries $r$ used to train the student is given by:
\begin{equation}
    r = \frac{n_S}{n_S + (m + 1)n_G} 
\end{equation}
In our setting (i.e. $n_G=1$, $n_S=5$) choosing $m$ equal to 1 or 10 respectively results in 71\% and 31\% of the query budget being used to directly train the student, while the remainder is spent to get better gradient estimates to train the generator. Despite using a  majority of queries to train the generator, the setting with $m=10$ achieves comparable accuracy.
In Table~\ref{table:m-values}, we observe how choosing lower values of $m$ achieves 85\% test  accuracy in much fewer queries.

\paragraph{Amortizing the cost.}
We hypothesize that since early in the training the discriminator (or student) provides only little signal, it is beneficial for the generator to initially rely on weak signals of gradient approximation. Effectively, this helps amortize the cost of gradient approximation over multiple epochs, and effectively pushes the expense to a later stage when the discriminator (or student) provides stronger signal. Figure~\ref{fig:amortizing} shows that relative to the query budget utilization, different values of $m$ perform similarly. %

This also suggests a hybrid strategy where the adversary first extracts a (somewhat poor) student model through distillation from a surrogate dataset. Indeed,  we show in Section~\ref{sec:surrogate} that surrogate datasets drawn from a different distribution (than the victim model's training data) enable distillation-based model extraction---albeit to a lower accuracy than in-distribution surrogate datasets. This poor initial student can then be improved by synthesizing queries with the data-free model extraction's generator to bring the student model closer to the victim model's performance.   

\paragraph{Case $\mathbf{m=1}$.} 
In the extreme case where the number of gradient approximation steps for forward differences ($m$) is set to 1, the approximated gradient is colinear to the random direction sampled for the approximation, but always points in the direction that helps maximize the loss (i.e. its projection onto the true gradient is positive). The cosine similarity with the true gradient is, thus, very small.
To validate this effect, we additionally experiment with $m=1$ where the approximate gradient was randomly flipped to the wrong direction with half probability. As hypothesized, the student accuracy did not improve beyond 20\% in our experiments on CIFAR10. 
This suggests that computing gradients that are extremely inaccurate  makes it possible to train the student as long as these gradients are  in the correct direction. %

\subsection{Impact of Logits Correction}
\label{subsec:exp-logits}

A model extraction attack should be applicable to the nature of predictions offered by MLaaS APIs. Most APIs provide per-class probability distribution rather than the true logits, since probabilities are more easily interpreted by the end user. To perform model extraction successfully we thus need to recover the logits from the victim's prediction probabilities. We show that is is possible to do so and recover approximate logits whose  Mean Average Error (MAE) with the true logits is low, on three different victim architecture.

The MAE reported in Table~\ref{table:logits-correction} are negligible compared to true logits which take values in the order of magnitude of 1. Therefore, the adversary can use these approximate logits in lieu of the true logits. 
In comparison, approximating true logits with plain log-probabilities resulted in a MAE in the order of magnitude of the true logits themselves. Using  the log-probabilities with such a large error makes the student training harder---it did not yield accuracy above 75\%.

This method is effective because the mean of the true logits is nearly 0 (see Table~\ref{table:logits-correction}). Therefore, subtracting the mean from the log-probabilities is equivalent to subtracting the additive constant $C(x)$ itself.

\section{Conclusions}
\label{sec:conclusions}

In this paper, we demonstrate that data-free model extraction is not only practical but also yields accurate copies of the victim model. This means that model extraction attacks is a credible threat to the intellectual property of models released intentionally or not to the public. We believe an interesting direction for future work is 
to detect such queries without decreasing the model's utility to legitimate users.

\section*{Acknowledgements}
We thank the reviewers and members of CleverHans Lab for their insightful feedback. This work was supported by a Canada CIFAR AI Chair, NSERC, a gift from  Microsoft, and sponsors of the Vector Institute.

{\small
\bibliographystyle{ieee_fullname}
\bibliography{egbib}
}
\clearpage
\appendix
\section{How Hard is it to Find a Surrogate Dataset?}
\label{app:surrogate}

\begin{table*}[t]
\centering
\begin{tabular}{@{}lr|rrrrrr@{}}
\toprule
        & Victim & CIFAR10 & CIFAR100 & SVHN   & MNIST  & SVHN$_{skew}$ & Random \\ \midrule
CIFAR10 & 95.5\% & 95.2\%  & 93.5\%   & 66.6\% & 37.2\% & -    & 10.0\%          \\
SVHN    & 96.2\% & 96.0\%  & -        & 96.3\% & 89.5\% & 96.1\%   & 84.1\%      \\ \bottomrule
\end{tabular}
\caption{Model Extraction accuracy across various surrogate datasets. Victim models were trained on the CIFAR10 and SVHN datasets, and the source accuracies are  reported under the heading `Victim'}
\label{table:surrogate-benchmark} 
\end{table*}

To motivate the need for data-free approaches to model extraction,  we show here that an adversary relying on a surrogate dataset must ensure that its distribution is close to the one of the victim's training set. Otherwise, model extraction will return a poor approximation of the victim.

Consider a \textit{victim} machine learning model, $\mathcal{V}$, trained on a proprietary dataset, $\mathcal{D_V}$. 
The victim model reveals its predictions through either a prediction API (as is common in MLaaS) or through the deployment of the model on devices accessible to adversaries.  
The adversary, $\mathcal{A}$,  attempts to steal $\mathcal{V}$ by querying it with a  surrogate dataset, $\mathcal{D_S}$. 
This surrogate dataset is assumed to be publicly available or easier to access because it does not need to be labeled.

We now perform a systematic study of the features that characterize the \textit{closeness} of $\mathcal{D_S}$ when compared to $\mathcal{D_V}$.
Let the private and surrogate datasets \(\mathcal{D_V}\) and \(\mathcal{D_S}\) be characterized by \(\mathcal{D} = \{\mathcal{X},P(X),\mathcal{Y}, P(Y|X)\}\)~\cite{ruder2017transferlearning}. The private and surrogate datasets can vary in three ways, which we will illustrate in the following with object recognition tasks: 

\begin{enumerate}
    \item \textbf{A: }$(\mathcal{X_V} \neq \mathcal{X_S})$. 
    When the inputs of \(\mathcal{D_V}\) and \(\mathcal{D_S}\) belong to different feature spaces (i.e. domain). In computer vision, for example, this can be a scenario wherein the input data (e.g., images, videos) for the datasets contain a different number of channels or pixels.

    \item \textbf{B: }$(P(X_\mathcal{V}) \neq P(X_\mathcal{S}))$. 
    While the input domain of both the surrogate and private datasets is the same, their marginal probability distribution is different. For example,  when the semantic nature is different for the two datasets (images of animals, digits, etc).
    
    \item \textbf{C: }$(P(Y_\mathcal{V}|X_\mathcal{V}) \neq P(Y_\mathcal{S}|X_\mathcal{S}))$. 
    We consider a setting where the semantic distribution $(P(X))$ is the same, but the class-conditional probability distributions of the victim and surrogate training sets are different, e.g. when the two datasets have class imbalance.
\end{enumerate}

\subsection{Optimization Problem}
The logit distribution of the victim network can often have a strong affinity toward the true label. To address this issue, Hinton et al.  suggested scaling the logits to make the probability distributions more informative~\cite{hinton2015distilling}. 
\begin{align*}
\mathcal{V}_i(x) = \frac{\exp{\left(v_i(x)/\tau\right)}}{\sum_j \exp{\left(v_j(x)/\tau\right)}}; \quad 
\mathcal{L} = - \tau^2
\text{KL}\left(
\mathcal{V}(x), \mathcal{S}(x)
\right)
\end{align*}
where $\tau>1$ is referred to as the temperature scaling parameter. In the knowledge distillation literature, a combination of both the cross entropy and knowledge distillation loss are used to query the teacher~\cite{cho2019efficacy}. However, model extraction we rely only on the KL divergence loss because the queries are made from a surrogate dataset which may not have any semantic binding to the true class.

\subsection{Experimental Setting}

\paragraph{Hyperparameters} 
To search over a meaningful hyperparameter space for the temperature co-efficient $\tau$, we refer to prior work in knowledge distillation such as~\cite{cho2019efficacy,hinton2015distilling,lan2018knowledge} to confine the search over $\tau \in \{1, 3, 5, 10\}$.  We used the SGD optimizer, and train the CIFAR10 students for 100 epochs, while the SVHN students were trained for 50 epochs. We experimented with two different learning schedules: (1) cyclic learning rate~\cite{smith2017cyclical}; and (2) step-decay learning rate. For step-decay, we reduced the learning rate by a factor of 0.2 at 30\%, 60\%, and 80\% of the training process. In both cases, the maximum learning rate was set to 0.1.

\paragraph{Experimental validation.} 
To illustrate our argument, we next detail the relation between a task-specific surrogate dataset and the accuracy of state-of-the-art model extraction techniques. The victim models under attack are ResNet-18-8x models, their accuracy is reported in Table~\ref{table:surrogate-benchmark}. Further details on the victim models training are provided in Section~\ref{sec:exp}). We find that querying from the original dataset yields the most query-efficient and accurate extraction results. This is not surprising given that this setup corresponds to the original knowledge distillation setting. Our observations are made on both CIFAR10 and SVHN:
\begin{itemize}
    \item \textbf{CIFAR10.} We benchmarked model extraction on 4 surrogate datasets, each reflecting a different property detailed above: CIFAR10~\cite{krizhevsky2009learning}, CIFAR100~\cite{krizhevsky2009learning} (BC), SVHN~\cite{netzer2011reading} (AB) and MNIST~\cite{lecun1998gradient} (AB). To ensure a fair comparison, we  bound the maximum number of distinct samples queried by 50,000 while performing model extraction.
    \item \textbf{SVHN.} We evaluated model extraction by querying from SVHN, SVHN$_{skew}$ (C), MNIST (A) and CIFAR10 (B) as surrogate datasets. Similar to the case for CIFAR10, we cap the maximum number of distinct samples queried to 50,000.
\end{itemize}

Finally, as a control for our experiments, we also studied the extraction accuracy of the models when trained using totally random queries.

\paragraph{Dataset adaptation.} The SVHN, CIFAR10, and CIFAR100 datasets contain $32\times32$ color images. To query networks trained on CIFAR10 and SVHN with images from the MNIST dataset, which contains grayscale images of size $28\times28$, we re-scaled the image and repeated the same input across all three RGB channels. In case of random input generation, we sample input tensors from a normal distribution with mean 0 and variance 1. Note that the teacher networks were trained on normalized datasets in the first place. Finally, in case of SVHN$_{skew}$, we  supplied images from only the first 5 classes of the dataset to skew the distribution of the modified dataset.

\paragraph{Results.}
We present the results for accuracy of extracted models across various surrogate datasets for CIFAR10 and SVHN in Table~\ref{table:surrogate-benchmark}.
Recall that both the CIFAR10 and CIFAR100 datasets are subsets from the same TinyImages~\cite{torralba2008tinyimages} dataset. We find that the identical source distribution was extremely useful in making relevant queries to the CIFAR10 teacher. The accuracy of the extracted model reached 93.5\%, just below the 95.5\% accuracy of the teacher model.
However, when we used the SVHN surrogate dataset to query the CIFAR10 teacher, with a different source distribution, the model extraction performance dropped remarkably, attaining a maximum of 66.6\% across all of the hyperparameters tried. 
In the most extreme scenario when querying the CIFAR10 teacher with MNIST--a dataset with disjoint feature space both in terms of number of pixels, and number of channels)---model accuracy did not improve beyond 37.2\%.

On the contrary, we notice that the victim trained on the SVHN dataset is much easier for the adversary to extract. Surprisingly, even when the victim is queried with completely random inputs, the extracted model attains an accuracy of over 84\% on the original SVHN test set. Further, nearly all surrogate datasets are able to achieve greater than 90\% accuracy on the test set. We hypothesize that this observation is linked to how the digit classification task, at the root of SVHN, is a simpler task for neural networks to solve, and the underlying representations (hence, not being as complex as for CIFAR10) can be learnt even when queried over random inputs.

Given the current understanding of model extraction, 
we make two conclusions: (1) the success of model extraction largely depends on the complexity of the task that the victim model aims to solve; and (2) similarity to source domain is critical for extracting machine learning models that solve complex tasks.
We posit that it may be nearly as expensive for the adversary to extract a CIFAR10 machine learning model with a good surrogate dataset, as is training from scratch. A weaker or non-task specific dataset may have lesser costs, but has high accuracy trade-offs.

\section{Recovering logits from probabilities}
\label{app:logits-correction}
The main difficulty with computing $\mathcal{L}_{\ell_1}$ is that it requires access to $\mathcal{V}$'s logits $v_i$, but we only have access to the probabilities of each class (i.e., after the softmax is applied to the logits). In a first approximation, the logits can be recovered by computing the log-probabilities but the resulting approximate logits are computed up to an additive constant $C(x)$ to which we don't have access in a black-box setting. This additive constant is the same for all logits but is different from one image to another. Related works on adversarial examples~\cite{bhagoji2018practical, chen2017zoo} use losses that are the difference of two logits, effectively canceling out the additive constant. In our case, the logits need to be used individually which makes the $\ell_1$ loss more difficult to compute in our setting.

To overcome this issue, we propose to approximate the true logits of each image $x$ in two steps. First, compute the logarithm of the probability vector $V(x)$. 
\begin{alignat}{2}
\Tilde{v}_i(x) &= \log \mathcal{V}_i(x)
    &= v_i + C(x)
\end{alignat}
Then, compute the approximate true logits $v^*_i(x)$ by subtracting the log-probability vector with its own mean: 
\begin{alignat}{2}
    v^*_i (x) = \Tilde{v}_i(x) - \frac{1}{K}\sum_{j=1}^K \Tilde{v}_j(x)
    \notag\\
    = v_i(x) - \frac{1}{K}\sum_{j=1}^K v_j(x) &\approx  v_i(x)
\end{alignat} 
The second equality holds because the mean of the log-probability vector $\Tilde{v}_i(x)$ is equal to the mean of the true victim logits $v_i(x)$ plus the mean of the additive constant (i.e. the $C(x)$ itself). By analyzing the mean values of the true logits from various pre-trained models---which proves to be negligible in comparison to the logit values themselves, we provide empirical evidence in Section~\ref{subsec:exp-logits} that this recovers a highly accurate  approximation of the true logits $v^*_i (x)$. 

\section{Examples of Synthetic Images}

Figure~\ref{fig:synthetic} shows 4 images from the generator towards the end of the attack on CIFAR-10. We do not observe any similarities with the images from the original training dataset.

\begin{figure}[h]
\centering
	\includegraphics[height=0.9\columnwidth]{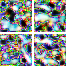}
	\caption{Four synthetic images from the generator.} 
	\label{fig:synthetic}
\end{figure}

\section{Hypothesis 1: Justification}
\label{app:proof-th}

\subsection{Preliminary results}
\begin{lemma}
\label{lemma:eigenvalues}
If $\mathcal{S}(x) \in (0, 1)^K$ is the softmax output of a differentiable function (e.g. a neural network) on an input $x$ and $s$ is the corresponding logits vector, then the Jacobian matrix $J = \pder[\mathcal{S}]{s}$ has an eigenvalue decomposition and all its eigenvalues are in the interval [0, 1].
\end{lemma}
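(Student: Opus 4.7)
The plan is to write down the Jacobian of the softmax explicitly, recognize that it is symmetric (hence diagonalizable by the spectral theorem), and then bound its quadratic form from both sides to pin the eigenvalues inside $[0,1]$.

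First I would compute $J$ directly. Writing $p = \mathcal{S}(x)$ for brevity, a short calculation using $\mathcal{S}_i = e^{s_i}/\sum_k e^{s_k}$ gives
\begin{equation*}
    J_{ij} = \pder[\mathcal{S}_i]{s_j} = p_i(\delta_{ij} - p_j),
\end{equation*}
so in matrix form $J = \operatorname{diag}(p) - pp^{\top}$. Since $\operatorname{diag}(p)$ and $pp^{\top}$ are both symmetric, $J$ is symmetric, and the spectral theorem immediately gives an eigenvalue decomposition with real eigenvalues and an orthonormal basis of eigenvectors. This handles the existence half of the claim.

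Next I would show $J \succeq 0$. For any $y \in \mathbb{R}^K$,
\begin{equation*}
    y^{\top} J y = \sum_{i=1}^K p_i y_i^2 - \Bigl(\sum_{i=1}^K p_i y_i\Bigr)^2.
\end{equation*}
Because $p$ is a probability vector (its entries are positive and sum to one, as $\mathcal{S}(x) \in (0,1)^K$ and softmax outputs sum to one), the right-hand side is the variance of $y$ under the distribution $p$, which is nonnegative by Jensen's inequality (equivalently, by Cauchy-Schwarz applied to $\sqrt{p_i}\cdot\sqrt{p_i} y_i$). Hence every eigenvalue of $J$ is $\geq 0$.

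For the upper bound I would prove $I - J \succeq 0$ the same way:
\begin{equation*}
    y^{\top}(I - J)y = \sum_{i=1}^K (1-p_i) y_i^2 + \Bigl(\sum_{i=1}^K p_i y_i\Bigr)^2 \geq 0,
\end{equation*}
using $0 < p_i < 1$ in the first term. This forces every eigenvalue of $J$ to be $\leq 1$. Combined with the previous step, all eigenvalues lie in $[0,1]$, completing the proof. I expect no real obstacle here; the only mildly delicate point is making sure the hypothesis $\mathcal{S}(x) \in (0,1)^K$ is actually used (it is what makes $I - \operatorname{diag}(p)$ PSD and what lets us treat $p$ as a genuine probability distribution so that Jensen applies).
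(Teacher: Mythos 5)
Your proposal is correct, and it reaches the conclusion by a genuinely different route than the paper. Both arguments start the same way: compute the Jacobian explicitly, $J = \operatorname{diag}(p) - pp^{\top}$ with $p = \mathcal{S}(x)$, and invoke symmetry plus the spectral theorem for the existence of the eigendecomposition. The paper then establishes nonnegativity of the eigenvalues by contradiction --- assuming $\lambda < 0$, analyzing the componentwise equation $x_i(\mathcal{S}_i - \lambda) = \mathcal{S}_i \langle X, \mathcal{S}\rangle$, and deriving an absurdity from a sign-and-summation argument --- and then gets the upper bound indirectly from the trace identity $\sum_i \lambda_i = 1 - \sum_i \mathcal{S}_i^2 < 1$ combined with nonnegativity. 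You instead bound the quadratic form on both sides: $y^{\top} J y = \sum_i p_i y_i^2 - \bigl(\sum_i p_i y_i\bigr)^2 \geq 0$ (the variance of $y$ under $p$, i.e., Jensen or Cauchy--Schwarz), and $y^{\top}(I - J)y = \sum_i (1-p_i) y_i^2 + \bigl(\sum_i p_i y_i\bigr)^2 \geq 0$, which pins every eigenvalue in $[0,1]$ directly. Your route is shorter and more transparent: it makes the rank-one structure $\operatorname{diag}(p) - pp^{\top}$ explicit, replaces the paper's case analysis with a one-line variance inequality, and bounds each eigenvalue above individually rather than through the aggregate trace argument (which only works because nonnegativity was already in hand). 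What the paper's version buys is self-containedness at a very elementary level --- it never appeals to positive semidefiniteness or the quadratic-form characterization of eigenvalue bounds --- and its trace computation incidentally exhibits the exact eigenvalue sum $1 - \sum_i \mathcal{S}_i^2$. Both proofs use the hypothesis $\mathcal{S}(x) \in (0,1)^K$ in the same essential way, namely that $p$ is a strictly positive probability vector.
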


\begin{proof}
By definition:
\begin{equation*}
    \forall i \in \{1 \dots K\}, ~\mathcal{S}_i = \frac{\exp(s_i)}{\sum_{k=1}^K \exp(s_k)}
\end{equation*}

For some $i, j \in \{1 \dots K\}$, if $i\neq j$:

\begin{align*}
    \pder[\mathcal{S}_i]{s_j} &= - \exp(s_j) \frac{\exp(s_i)}{{(\sum_{k=1}^K \exp(s_k))}^2} \\
    &= - \mathcal{S}_i \mathcal{S}_j
\end{align*}
    
if $i = j$:
\begin{align*}
\pder[\mathcal{S}_i]{s_j} &= \frac{ \exp(s_i) (\sum_{k=1}^K \exp(s_k)) - \exp(s_j) \exp(s_i)}{{(\sum_{k=1}^K \exp(s_k))}^2}\\
&= \frac{ \exp(s_i)}{\sum_{k=1}^K \exp(s_k)} - \frac{ \exp(s_i)^2}{(\sum_{k=1}^K \exp(s_k))^2} \\
&= \mathcal{S}_i (1 - \mathcal{S}_i)
\end{align*}

Therefore, $\forall x,$
\begin{equation*}
 J = \pder[\mathcal{S}]{s} =
\begin{bmatrix}
\mathcal{S}_1 (1 - \mathcal{S}_1) & -\mathcal{S}_1 \mathcal{S}_2 & \dots & -\mathcal{S}_1 \mathcal{S}_K \\ 
-\mathcal{S}_1 \mathcal{S}_2 & \mathcal{S}_2 (1 - \mathcal{S}_2) & \dots & -\mathcal{S}_2 \mathcal{S}_K \\
\vdots  & \vdots &  & \vdots \\ 
-\mathcal{S}_1 \mathcal{S}_K & -\mathcal{S}_2 \mathcal{S}_K & \dots & \mathcal{S}_K (1 - \mathcal{S}_K)\\ 
\end{bmatrix}
\end{equation*}

The matrix $J$ is real-valued symmetric, therefore it has an eigen-decomposition with real eigenvalues. $\exists \lambda_1,\lambda_2, \dots, \lambda_K \in \mathbb{R}, X_1, X_2, \dots, X_K \neq 0$ such that:

\begin{equation*}
    \forall i \in \{1 \dots K \}, J X_i = \lambda_i X_i
\end{equation*}

Let us prove that all eigenvalues are in the interval [0, 1]. Suppose for a contradiction that one eigenvalue $\lambda$ is strictly negative. Let the associated eigenvector be: 
\begin{equation*}
X = [x_1, x_2, \dots, x_K]^T
\end{equation*}

The $i$-th component of the vector $JX$ is:
\begin{align*}
[JX]_i &= \mathcal{S}_i x_i - \mathcal{S}_i \sum_{k=1}^K x_k \mathcal{S}_k\\
&= \mathcal{S}_i x_i - \mathcal{S}_i \langle X , \mathcal{S}\rangle
\end{align*}

where $\langle\cdot,\cdot\rangle$ is the standard inner product.

Since $X$ is an eigenvector we have,

\begin{equation*}
    JX = \lambda X
\end{equation*}

So $\forall~ i$,
\begin{align*}
    \mathcal{S}_i x_i - \mathcal{S}_i \langle X , \mathcal{S}\rangle &= \lambda x_i \\
    x_i (\mathcal{S}_i - \lambda) &= \mathcal{S}_i \langle X , \mathcal{S}\rangle
\end{align*}

Since $X \neq 0$, $\exists~i_0$ such that $x_{i_0} \neq 0$. Furthermore, $\lambda$ is strictly negative so:

\begin{equation*}
    x_{i_0} (\mathcal{S}_{i_0} - \lambda) = \mathcal{S}_{i_0} \langle X , \mathcal{S}\rangle \neq 0
\end{equation*}

Therefore, the inner product on the right hand side is non-zero.

In addition, $\lambda<0$ implies that $\mathcal{S}_i - \lambda > \mathcal{S}_i > 0$ so $ \forall~i, x_i$ and $\langle X , \mathcal{S}\rangle$ have the same sign. There are two cases left. 

\underline{If $\langle X , \mathcal{S}\rangle > 0$}, then $\forall~i, x_i > 0$ and:

\begin{align*}
     x_i (\mathcal{S}_i - \lambda) &> x_i \mathcal{S}_i\\
     \mathcal{S}_i \langle X , \mathcal{S}\rangle &> x_i \mathcal{S}_i
\end{align*}

By summing on all $i$ we obtain:

\begin{align*}
\sum_{i=1}^K \mathcal{S}_i \langle X , \mathcal{S}\rangle &> \sum_{i=1}^K x_i \mathcal{S}_i\\
\langle X , \mathcal{S}\rangle \sum_{i=1}^K \mathcal{S}_i &> \langle X , \mathcal{S}\rangle\\
\langle X , \mathcal{S}\rangle &> \langle X , \mathcal{S}\rangle
\end{align*}

Which is an \textbf{absurdity}.
\newline

\underline{If $\langle X , \mathcal{S}\rangle < 0$}, then $\forall~i, x_i < 0$ and:

\begin{align*}
     x_i (\mathcal{S}_i - \lambda) &< x_i \mathcal{S}_i\\
     \mathcal{S}_i \langle X , \mathcal{S}\rangle &< x_i \mathcal{S}_i
\end{align*}

The same summation and reasoning yields an \textbf{absurdity}. We just proved that all the eigenvalues of $J$ are non-negative.

Lastly, the trace of the Jacobian matrix  $tr(J)$ equals the sum of all eigenvalues. Computing the trace yields:

\begin{align*}
    tr(J) = \sum_{i=1}^K \lambda_i &= \sum_{i=1}^K \mathcal{S}_i (1 - \mathcal{S}_i)\\
    &= \sum_{i=1}^K \mathcal{S}_i - \sum_{i=1}^K \mathcal{S}_i^2\\
    &= 1 - \sum_{i=1}^K \mathcal{S}_i^2 < 1\\
\end{align*}

Since $\lambda_i \geq 0$ and $\sum_{i=1}^K \lambda_i < 1$, all eigenvalues must be in the interval [0, 1], which concludes the proof.

\end{proof}

\begin{lemma}
\label{lemma:norm-product}
In the same setting as for Lemma~\ref{lemma:eigenvalues}, if $J$ is the Jacobian matrix $ \pder[\mathcal{S}]{s}$ then for any vector $Z$ we have:

\begin{equation*}
    \|JZ\| \leq \|Z\|
\end{equation*}
\end{lemma}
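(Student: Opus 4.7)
The plan is to invoke the spectral theorem on $J$ and then use the eigenvalue bound established in Lemma~\ref{lemma:eigenvalues}. Concretely, since $J = \partial \mathcal{S}/\partial s$ was shown in the previous lemma to be real symmetric with all eigenvalues lying in $[0,1]$, the spectral theorem gives an orthonormal basis $(X_1,\dots,X_K)$ of $\mathbb{R}^K$ consisting of eigenvectors of $J$, with associated eigenvalues $\lambda_1,\dots,\lambda_K \in [0,1]$.

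Given this, I would proceed by decomposing an arbitrary $Z \in \mathbb{R}^K$ in this basis as $Z = \sum_{i=1}^K c_i X_i$, where $c_i = \langle Z, X_i\rangle$. Applying $J$ gives $JZ = \sum_{i=1}^K \lambda_i c_i X_i$, and because the $X_i$ are orthonormal, both norms reduce to sums of squared coefficients:
\begin{equation*}
\|Z\|^2 = \sum_{i=1}^K c_i^2, \qquad \|JZ\|^2 = \sum_{i=1}^K \lambda_i^2 c_i^2.
\end{equation*}
Since $0 \le \lambda_i \le 1$ (hence $\lambda_i^2 \le 1$) by Lemma~\ref{lemma:eigenvalues}, each term on the right is bounded by the corresponding term on the left, yielding $\|JZ\|^2 \le \|Z\|^2$. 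Taking square roots finishes the proof.

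There is essentially no obstacle here: the nontrivial content was already carried out in Lemma~\ref{lemma:eigenvalues} (establishing symmetry and the eigenvalue interval). The only subtlety worth flagging is that one must use an \emph{orthonormal} eigenbasis, which requires symmetry of $J$ rather than merely diagonalizability; without orthonormality the relation $\|Z\|^2 = \sum c_i^2$ would fail in general. Once symmetry is in hand this is precisely the statement that the operator $2$-norm of a symmetric matrix equals its spectral radius, which is $\le 1$ in our case.
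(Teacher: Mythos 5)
Your proof is correct and follows essentially the same route as the paper: decompose $Z$ in the orthonormal eigenbasis guaranteed by the symmetry of $J$, apply $J$, and use $\lambda_i \in [0,1]$ from Lemma~\ref{lemma:eigenvalues} to bound the squared norm termwise. If anything, you are slightly more careful than the paper, which writes $\|JZ\| = (JZ)^T(JZ)$ where it means the squared norm; your explicit squaring and final square root fix that cosmetic slip.
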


\begin{proof}
Let $\lambda_1, \lambda_2, \dots, \lambda_K$ be the eigenvalues of $J$ and $X_1, X_2, \dots, X_K$ be the associated eigenvectors. We can decompose $Z$ with the orthonormal eigenvector basis:

\begin{equation*}
    Z = \sum_{i=1}^K \alpha_i X_i
\end{equation*}

Computing the product $JZ$ yields:

\begin{align*}
    JZ = \sum_{i=1}^K \lambda_i \alpha_i X_i 
\end{align*}

The norm of the product is:
\begin{align*}
    \|JZ\| = (JZ)^T (JZ) &= \sum_{i=1}^K \lambda_i^2 \alpha_i^2\\
    &\leq \sum_{i=1}^K \alpha_i^2
\end{align*}
because $\forall~i, |\lambda_i| \leq 1$ (see Lemma~\ref{lemma:eigenvalues}). Since the eigenvector basis is orthonormal we have 

\begin{equation*}
 \|JZ\| \leq \sum_{i=1}^K \alpha_i^2 = \| Z \|
\end{equation*}
\end{proof}

\begin{lemma}
\label{lemma:similarity-jacobians}
Let $\mathcal{S}(x)$ and $\mathcal{V}(x)$ be the softmax output of two differentiable functions (e.g. neural networks) on an input $x$, with respective logits $s(x)$ and $v(x)$. When $\mathcal{S}$ converges to $\mathcal{V}$, then $\pder[\mathcal{S}]{s}$ converges to $\pder[\mathcal{V}]{v}$.
\end{lemma}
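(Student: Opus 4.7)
The plan is to leverage the explicit closed form of the softmax Jacobian derived in Lemma~\ref{lemma:eigenvalues}. That computation shows
\[
\pder[\mathcal{S}_i]{s_j} = \mathcal{S}_i(x)\,\bigl(\delta_{ij} - \mathcal{S}_j(x)\bigr),
\qquad
\pder[\mathcal{V}_i]{v_j} = \mathcal{V}_i(x)\,\bigl(\delta_{ij} - \mathcal{V}_j(x)\bigr),
\]
so each entry of either Jacobian is a fixed polynomial in the softmax output values alone, with no explicit dependence on the logits. This is the key observation: once the softmax probabilities match, so must the Jacobians computed through them.

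With this in hand, the proof becomes almost immediate. First I would invoke Lemma~\ref{lemma:eigenvalues} to rewrite both Jacobians in the above form. Second, I would make precise what ``$\mathcal{S} \to \mathcal{V}$'' means for the argument: it is enough that $\mathcal{S}_i(x) \to \mathcal{V}_i(x)$ componentwise (say, uniformly on the relevant input domain, or pointwise at the $x$ under consideration), which is the natural reading in the context of student--teacher convergence used throughout the paper. Third, I would apply continuity: the map $(p_1,\dots,p_K) \mapsto p_i(\delta_{ij} - p_j)$ is a polynomial, hence continuous, so convergence of the softmax outputs forces convergence of every entry of the Jacobian. Finally, I would conclude that $\pder[\mathcal{S}]{s} \to \pder[\mathcal{V}]{v}$ entrywise, which implies convergence in any matrix norm (e.g.\ Frobenius or operator norm), giving the stated result.

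There is essentially no obstacle here once Lemma~\ref{lemma:eigenvalues} has been established; the only mild subtlety is being explicit about the mode of convergence, since the lemma as stated is a bit informal about what ``$\mathcal{S}$ converges to $\mathcal{V}$'' means. I would therefore add a one-line remark specifying that componentwise convergence of the softmax outputs is the hypothesis actually used, and that this transfers to entrywise convergence of the Jacobians via the continuity of a polynomial map. No appeal to differentiability of $\mathcal{S}$ or $\mathcal{V}$ beyond what is needed to define the logit-to-softmax Jacobian is required, because the softmax itself handles all the smoothness at the relevant layer.
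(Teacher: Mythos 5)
Your proposal is correct and follows essentially the same route as the paper: both rely on the explicit entrywise form $\mathcal{S}_i(\delta_{ij}-\mathcal{S}_j)$ of the softmax Jacobian from Lemma~\ref{lemma:eigenvalues}, and the paper's expansion $\mathcal{V}_i - \mathcal{S}_i = \epsilon_i$ with $(\mathcal{V}_i+\epsilon_i)(1-\mathcal{V}_i-\epsilon_i) = \mathcal{V}_i(1-\mathcal{V}_i) + o(1)$ is just an explicit hands-on verification of the continuity-of-a-polynomial-map argument you invoke. Your remark about pinning down the mode of convergence is a reasonable clarification of an informality the paper leaves implicit, but the substance of the argument is the same.
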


\begin{proof}
Recall that
\begin{equation*}
\pder[\mathcal{S}]{s} =
\begin{bmatrix}
\mathcal{S}_1 (1 - \mathcal{S}_1) & -\mathcal{S}_1 \mathcal{S}_2 & \dots & -\mathcal{S}_1 \mathcal{S}_K \\ 
-\mathcal{S}_1 \mathcal{S}_2 & \mathcal{S}_2 (1 - \mathcal{S}_2) & \dots & -\mathcal{S}_2 \mathcal{S}_K \\
\vdots  & \vdots &  & \vdots \\ 
-\mathcal{S}_1 \mathcal{S}_K & -\mathcal{S}_2 \mathcal{S}_K & \dots & \mathcal{S}_K (1 - \mathcal{S}_K)\\ 
\end{bmatrix}
\end{equation*}

If $\mathcal{V}_i(x) - \mathcal{S}_i(x) = \epsilon_i(x)$, then:

\begin{align*}
    \mathcal{S}_i (1 - \mathcal{S}_i) &= (\mathcal{V}_i + \epsilon_i) (1 - \mathcal{V}_i -\epsilon_i)\\
    &= \mathcal{V}_i (1 - \mathcal{V}_i) + \epsilon_i (1 - \mathcal{V}_i) - \epsilon_i^2\\
    &= \mathcal{V}_i (1 - \mathcal{V}_i) + o(1)
\end{align*}

and

\begin{align*}
    -\mathcal{S}_i \mathcal{S}_j &= -(\mathcal{V}_i + \epsilon_i) (\mathcal{V}_j + \epsilon_j)\\
    &= -\mathcal{V}_i \mathcal{V}_j - \mathcal{V}_i \epsilon_j -\mathcal{V}_j \epsilon_i - \epsilon_i \epsilon_j\\
    &= -\mathcal{V}_i \mathcal{V}_j + o(1)
\end{align*}

Therefore, we can write:

\begin{equation*}
    \pder[\mathcal{S}]{s} = \pder[\mathcal{V}]{v} + \Bar{\epsilon}(x)
\end{equation*}

where $\Bar{\epsilon}(x)$ converges to the null matrix as $\mathcal{S}$ converges to $\mathcal{V}$. In other words we can write:

\begin{align*}
    \pder[\mathcal{S}]{s} \underset{\mathcal{S} \rightarrow \mathcal{V}}{\approx} \pder[\mathcal{V}]{v}
\end{align*}

\end{proof}

\subsection{Justification of the hypothesis.} %

Hypothesis~\ref{th:grad-vanish} states that for two differentiable functions with softmax output $\mathcal{S}$ and $\mathcal{V}$, and respective logits $s$ and $v$, the gradients of the KL divergence loss $\mathcal{L}_{KL}$ with respect to the input should be small compared to the gradients of the $\ell_1$ norm loss $\mathcal{L}_{\ell_1}$ as $\mathcal{S}$ converges to $\mathcal{V}$. $\forall x \in [-1, 1]^d$:

\begin{equation*}
    \|\nabla_x \mathcal{L}_{\text{KL}}(x)\| \underset{\mathcal{S} \rightarrow \mathcal{V}}{\ll} \|\nabla_x \mathcal{L}_{\ell_1}(x)\| 
\end{equation*}

\begin{proof}

First, we note that:

\begin{equation*}
    \sum_{i=1}^K \mathcal{S}_i = 1
\end{equation*}

implies

\begin{equation*}
    \sum_{i=1}^K \pder[\mathcal{S}_i]{x} = \Vec{0}
\end{equation*}

And the same holds for $\mathcal{V}$ because both are probability distributions.

Then we compute the gradients for both loss functions:

\underline{For the {$\ell_1$ norm loss}:}

\begin{equation*}
\nabla_x \mathcal{L}_{\ell_1}(x) = \sum_{i=1}^K {sign}(v_i - s_i) \left( \pder[v_i]{x} - \pder[s_i]{x} \right) 
\end{equation*}

\underline{For the {KL divergence loss:}}

\begin{align*}
\nabla_x \mathcal{L}_{\text{KL}}(x) &= \sum_{i=1}^K \pder[\mathcal{V}_i]{x} \log \mathcal{V}_i + 1 \pder[\mathcal{V}_i]{x} - \pder[\mathcal{V}_i]{x} \log \mathcal{S}_i - \pder[\mathcal{S}_i]{x} \frac{\mathcal{V}_i}{\mathcal{S}_i}\\
&= \sum_{i=1}^K \pder[\mathcal{V}_i]{x} + \sum_{i=1}^K \pder[\mathcal{V}_i]{x} \log \frac{\mathcal{V}_i}{\mathcal{S}_i} - \pder[\mathcal{S}_i]{x} \frac{\mathcal{V}_i}{\mathcal{S}_i}\\
&= \sum_{i=1}^K \pder[\mathcal{V}_i]{x} \log \frac{\mathcal{V}_i}{\mathcal{S}_i} - \pder[\mathcal{S}_i]{x} \frac{\mathcal{V}_i}{\mathcal{S}_i}\\
\end{align*}

When $\mathcal{S}$ converges to $\mathcal{V}$, we can write

\begin{equation*}
    \mathcal{V}_i(x) = \mathcal{S}_i(x) (1 + \delta_i(x))
\end{equation*}

where $\delta_i(x) \underset{\mathcal{S} \rightarrow \mathcal{V}}{\rightarrow} 0$. 

Since $\log 1 + x \approx x$ when $x$ is close to $0$ we can write:

\begin{align*}
\nabla_x \mathcal{L}_{\text{KL}}(x) &= \sum_{i=1}^K \pder[\mathcal{V}_i]{x} \log \frac{\mathcal{V}_i}{\mathcal{S}_i} - \pder[\mathcal{S}_i]{x} \frac{\mathcal{V}_i}{\mathcal{S}_i}\\
&\approx\sum_{i=1}^K \pder[\mathcal{V}_i]{x} \delta_i - \pder[\mathcal{S}_i]{x} (1 + \delta_i)\\
&\approx\sum_{i=1}^K \delta_i \left(\pder[\mathcal{V}_i]{x} - \pder[\mathcal{S}_i]{x}\right) + \sum_{i=1}^K \pder[\mathcal{S}_i]{x}\\
&\approx\sum_{i=1}^K \delta_i \left(\pder[\mathcal{V}_i]{x} - \pder[\mathcal{S}_i]{x}\right)\\
&\approx\sum_{i=1}^K \delta_i \pder[\mathcal{V}]{v} \left(\pder[v_i]{x} - \pder[s_i]{x}\right) \text{(Lemma~\ref{lemma:similarity-jacobians})}\\
&\approx \pder[\mathcal{V}]{v} \sum_{i=1}^K \delta_i  \left(\pder[v_i]{x} - \pder[s_i]{x}\right)\\
\end{align*}

Using Lemma~\ref{lemma:norm-product}, the norm is upper bounded by:

\begin{equation}
\label{eq:norm-kl-bound}
    \|\nabla_x \mathcal{L}_{\text{KL}}(x)\| \leq \norm{\sum_{i=1}^K \delta_i  \left(\pder[v_i]{x} - \pder[s_i]{x}\right)}
\end{equation}

For the $\ell_1$ norm, however, the norm is:

\begin{equation}
\label{eq:norm-l1-bound}
\|\nabla_x \mathcal{L}_{\ell_1}(x)\| = \norm{\sum_{i=1}^K {sign}(v_i - s_i) \left( \pder[v_i]{x} - \pder[s_i]{x} \right) }
\end{equation}

From equation~\ref{eq:norm-kl-bound}, we can observe that each term is negligible compared to its counterpart in equation~\ref{eq:norm-l1-bound}: for all $i$ we have:

\begin{align*}
\norm{ \delta_i  \left(\pder[v_i]{x} - \pder[s_i]{x}\right)}
&\leq \epsilon \norm{ \left(\pder[v_i]{x} - \pder[s_i]{x}\right)}\\
\end{align*}

And also $\forall i$:

\begin{equation*}
\norm{{sign}(v_i - s_i) \left( \pder[v_i]{x} - \pder[s_i]{x} \right) } = \norm{\left( \pder[v_i]{x} - \pder[s_i]{x} \right) }
\end{equation*}

Therefore, by summing these terms on the index $i$ we can expect the KL divergence gradient to be small in magnitude compared to those of the $\ell_1$ norm. However, it does not seem possible to prove this result rigorously without further assumptions on the data distribution or the mode of convergence of $\mathcal{S}$.

\end{proof}

\end{document}